\DeclareMathOperator{\argmin}{arg\,min}
\DeclareMathOperator{\sgn}{sgn}
\newcommand{\C}{\mathcal{C}}
\newcommand{\AAS}{\mathcal{E}_{T, \delta}}
\renewcommand{\vec}[1]{\boldsymbol{\mathbf #1}}
\newtheorem{theorem}{Theorem}[section]
\newtheorem{lemma}{Lemma}
\newcommand{\printfnsymbol}[1]{%
  \textsuperscript{\@fnsymbol{#1}}%
}
\journal{Pattern Recognition}
\begin{document}

\begin{frontmatter}

\title{A black-box adversarial attack for poisoning clustering}
\author[rvt]{Antonio Emanuele Cinà\corref{cor1}\fnref{fn1}}
\ead{antonioemanuele.cina@unive.it}

\author[rvt]{Alessandro Torcinovich}
\ead{ale.torcinovich@unive.it}
\author[rvt]{Marcello Pelillo\corref{cor2}}
\ead{pelillo@unive.it}

\cortext[cor1]{Corresponding author}
\cortext[cor2]{Principal corresponding author}

\address[rvt]{Ca’ Foscari University of Venice, Italy}

\begin{abstract}
Clustering algorithms play a fundamental role as tools in decision-making and sensible automation processes. Due to the widespread use of these applications, a robustness analysis of this family of algorithms against adversarial noise has become imperative. To the best of our knowledge, however, only a few works have currently addressed this problem. In an attempt to fill this gap, in this work, we propose a black-box adversarial attack  for crafting adversarial samples to test the robustness of clustering algorithms. We formulate the problem as a constrained minimization program, general in its structure and customizable by the attacker according to her capability constraints. We do not assume any information about the internal structure of the victim clustering algorithm, and we allow the attacker to query it as a service only. In the absence of any derivative information, we perform the optimization with a custom approach inspired by the Abstract Genetic Algorithm (AGA). In the experimental part, we demonstrate the sensibility of different single and ensemble clustering algorithms against our crafted adversarial samples on different scenarios. Furthermore, we perform a comparison of our algorithm with a state-of-the-art approach showing that we are able to reach or even outperform its performance. Finally, to highlight the general nature of the generated noise, we show that our attacks are transferable even against supervised algorithms such as SVMs, random forests and neural networks.
\end{abstract}

\begin{keyword}
Clustering \sep Adversarial machine learning  \sep Secure machine learning \sep Unsupervised learning.
\end{keyword}

\end{frontmatter}

\section{Introduction}
The state of the art in machine learning and computer vision has greatly improved over the course of the last decade, to the point that many algorithms are commonly used as effective aiding tools in security (spam/malware detection \cite{SpamMailDetection}, face recognition \cite{DBLP:conf/iccv/TapaswiLF19}) or decision making (road-sign detection \cite{stopSignSVM}, cancer detection \cite{cancerPrognosis}, financial sentiment analysis \cite{DBLP:journals/jbd/SohangirWPK18}) related tasks. The increasing pervasiveness of these applications in our everyday life poses an issue about the robustness of the employed algorithms against sophisticated forms of non-random noise.

Adversarial learning has emerged over the past few years as a line of research focused on studying and addressing the aforementioned robustness issue. Perhaps, the most important result in this field is the discovery of \emph{adversarial noise}, a wisely crafted form of noise that, if applied to an input, does not affect human judgment but can significantly decrease the performance of the learning models \cite{SzegedyProperties, RobinAdversarial}. Adversarial noise has been applied with success to fool models used in security scenarios such as spam filtering \cite{lowdSpamFilters, battistaSpamFilters, AbdolrahmanspamFilter} or malware detection \cite{isDataClustering}, but also in broader scenarios such as image classification \cite{stopSign}.

The vast majority of the works done so far in this field deals with supervised learning. However, its unsupervised counterpart is equally present in sensible applications, such as fraud detection, image segmentation, and market analysis, not to mention the plethora of security-based applications for detecting dangerous or illicit activities 
\cite{dnsTraffic, androidMalware, networkTraffic, malwareClust, ec2Malware}. 

It follows that the robustness of unsupervised algorithms used by those applications is crucial to give credibility to the results provided. Among the unsupervised tasks, in this work, we focus our attention on instance clustering with feature and image data. 

The majority of clustering algorithms are not differentiable, thus adversarial gradient-based approaches -- widely used in supervised settings -- are not directly applicable. Since, in general, the machine learning field is currently dominated by gradient-based methods, this may represent a possible reason for the limited interest in this field. Nonetheless, the problem has been addressed in a complete white-box setting in \cite{HidingSkillicorn, isDataClustering, dbscanAttack}, where some gradient-free attack algorithms have been proposed. In these works, the authors usually leverage the internal behavior of the clustering methods under study to craft \textit{ad-hoc} adversarial noise. To the best of our knowledge, little work has been done against black-box algorithms. The design of black-box adversarial attacks, not only can help in finding common weaknesses of clustering algorithms but can also pave the road towards general rules for the formulation of robust clustering algorithms. In this work, we propose an algorithm to craft adversarial examples in a gradient-free fashion, without knowing the identity of the target clustering method. We assume that the attacker can only perform queries to it.
Furthermore, we argue that, due to its general nature, the noise generated by our adversarial algorithm can also be applied to fool effectively supervised methods.

The main contributions of our work are as follows: (a) We design a new black-box gradient-free optimization algorithm to fool data clustering algorithms, and provide convergence guarantees. (b) We propose a new objective function that takes into consideration the attacker's capability constraints, motivating its suitability in this setting. (c) We perform experiments on three different datasets against different clustering methods, showing that our algorithm can significantly affect the clustering performance. (d) Following the work of \cite{transferability}, we perform a transferability analysis, showing that our crafted adversarial samples are suitable to fool supervised algorithms.

\section{Related Work}
Several works use clustering for extracting data patterns in a given dataset. For instance, the work of \cite{Malheur} proposed \textit{Malheur}, a tool for behavioral malware detection that combines clustering and classification for detecting novel malware categories. \cite{androidMalware} have proposed \textit{AnDarwin}, a software for detecting plagiarism in Android applications. In this approach, clustering is used to handle large numbers of applications, unlike previous methods that compare apps pairwise. More recently, \cite{ipclustering} have presented a tool for anomaly detection in networking by using a clustering algorithm. Despite the greater need to have robust clustering algorithms, only a few works address their security problems.

The first works on the analysis of adversarial manipulations against clustering algorithms were proposed in \cite{HidingSkillicorn} and \cite{FirstAdversarialClustering}. The authors observed that some samples could be misclustered by positioning them close to the original cluster boundary, so that a new \emph{fringe} cluster is formed.

\cite{isDataClustering} provided a theoretical formulation for the adversarial clustering problem and proposed a perfect-knowledge attack to fool single-linkage hierarchical clustering. In particular, the authors defined two different attack strategies: poisoning and obfuscation. The former infects data to violate the system availability and deteriorate the clustering results. The latter taints a target set of samples to violate the system integrity. In our threat model, we share the same aims of the poisoning strategy; however, differently from what has been done in \cite{isDataClustering}, we extend the application of poisoning by allowing the attacker to manipulate already existing samples in the dataset instead of injecting new ones.
Later on, in \cite{poisoningComplete}, the previous work was extended by proposing a threat model against complete-linkage hierarchical clustering. \cite{dbscanAttack} defined a threat algorithm to fool DBSCAN-based algorithms by selecting and then merging arbitrary clusters.

All the aforementioned works assume that the attacker has perfect knowledge about the clustering algorithm under attack. In our work, we overcome this assumption by proposing a gradient-free algorithm to fool clustering algorithms in a generalized black-box setting, meaning that the attacker has no prior knowledge about the clustering algorithm and its parameters. We design our algorithm as an instance of an Abstract Genetic Algorithm \cite{DBLP:conf/ppsn/EibenAH90}, in which the adversarial noise improves generation by generation.

Recently, a similar problem has been addressed in \cite{suspicion}, where it has been proposed a derivative-free, black-box attack strategy to target clustering algorithms working on linearly separable tasks. The approach consists of manipulating only one specific input sample feature-by-feature to corrupt the clustering decision boundary. However, our method is still different, since:
\begin{itemize}
    \item It has been designed for attacking generic clustering algorithms (not only linearly separable ones).
    
    \item We propose a way to address multi-clustering problems by allowing the attacker to manipulate samples coming from different clusters.
    
    \item We prove that our algorithm has significant convergence properties to find the optimal perturbation for multiple samples and features at the same time.
    
    \item We penalize our solutions by considering the number of manipulated features in addition to the maximum acceptable noise threshold.
\end{itemize}

\section{Methodology}
Let $\vec{X} \in \mathbb{R}^{n\times d}$ denote a feature matrix representing the dataset to be poisoned, where $n$ is the number of samples and $d$ is the number of features. We define $\C: \mathbb{R}^{n \times d} \rightarrow \{1, \dots, K\}^n$ to be the target clustering algorithm, that separates $n$ samples into $K$ different classes ($1 \le K \le n$). 
We remark that by querying the clustering algorithm, the attacker can retrieve the number of clusters, and they may also change during the evaluation.

We consider the problem of crafting an adversarial mask $\vec{\epsilon}$, to be injected into $\vec{X}$, such that the clustering partitions $\C(\vec{X})$ and $\C(\vec{X} + \vec{\epsilon})$ are different to a certain degree. In real scenarios, the attacker may follow some policies on the nature of the attack, usually imposed by intrinsic constraints on the problem at hand \cite{DBLP:journals/pr/BiggioR18}. We model the scenario in which the attacker may want to perturb a specific subset of samples $T \subseteq \{1, \dots, n\}$, in such a way that the attack is less human-detectable. i.e. by constraining the norm of $\vec{\epsilon}$ \cite{DBLP:conf/cvpr/Moosavi-Dezfooli17, DBLP:journals/tec/SuVS19}. 
In our work, the attacker's capability constraints \cite{DBLP:journals/pr/BiggioR18} are thus defined by (a) an \emph{attacker's maximum power} $\delta$, which is the maximum amount of noise allowed to be injected in a single entry $\vec{x}_{ij}$, (b) an \emph{attacker's maximum effort} $\gamma$, which is the maximum number of manipulable entries of $\vec{X}$. Further, we assume the attacker has access to the feature matrix $\vec{X}$, and she can query the clustering algorithm $\C$ under attack. Similarly to \cite{suspicion} the adversary exercises a causative influence by manipulating part of the data to be clustered without any further information about the victim's algorithm $\C$.

Given these considerations, an optimization program for our task is proposed as follows:
\begin{mini}|s|
{\vec{\epsilon} \in \mathcal{E}_{T,\delta}}{\phi(\C(\vec{X}), \C(\vec{X} + \vec{\epsilon}))}
{}{}
\label{opt:2norm}
{}{}
\end{mini}
where $\phi$ is a similarity measure between clusterings, and
\begin{equation}~\label{space}
  \mathcal{E}_{T,\delta} = \{\vec{v} \in \mathbb{R}^{n \times d}, \|\vec{v}\|_{\infty} < \delta \land \vec{v}_i = \vec{0} \ \forall i \notin T\}
\end{equation}
is the \emph{adversarial attack space}, which defines the space of all possible adversarial masks that satisfy the maximum power constraints and perturb only the samples in $T$. A problem without such capability constraints can be denoted with $\mathcal{E}_{\vec{X},\infty}$. Note that $\gamma$ is not directly referenced in $\mathcal{E}_{T,\delta}$ but is bounded by $T$ itself, namely $\gamma = |T| \cdot d$.

We further elaborate Program \ref{opt:2norm} by searching for low Power \& Effort (P\&E) noise masks, in order to enforce the non-detectability of the attack. To this end, we adopt a similar strategy as in \cite{DBLP:conf/iclr/LiuCLS17}, which adds a penalty term $\lambda \|\vec{\epsilon}\|_p$ to the cost function, usually with $p = 0, 2$ or $\infty$.
Following this approach, we reformulate Program \ref{opt:2norm} by including a penalty term that takes into consideration both the attacker's P\&E which leverages the $\infty$ and $0$ norms, respectively. The optimization program becomes:
\begin{mini}|s|
{\vec{\epsilon} \in \mathcal{E}_{T,\delta}}{\phi(\C(\vec{X}), \C(\vec{X} + \vec{\epsilon})) + \lambda \|\vec{\epsilon}\|_0 \|\vec{\epsilon}\|_\infty}
{}{}
\label{opt:zeroInfty}
{}{}
\end{mini}
This choice keeps the optimization program interpretable since it establishes a straightforward connection to our minimization desiderata (low P\&E).
In addition, our penalty term can be seen as a proxy function for $\|\vec{\epsilon}\|_p$, granting similar regularization properties to the optimization. Indeed the P\&E penalty is an upper bound to the single norm term, as the following lemma shows:
\begin{lemma}\label{lem:normbound}
Let $\vec{x} \in \mathbb{R}^n$, and $p, q \in \mathbb{R} \cup \{+\infty\}$ such that $1 \le p \le q < +\infty$ then:
\begin{equation}
  \|\vec{x}\|_p \leq \|\vec{x}\|_0 \|\vec{x}\|_q
\end{equation}
\end{lemma}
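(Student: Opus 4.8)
The plan is to reduce the statement to the classical comparison between $\ell^p$ norms on a finite-dimensional space, but with the ambient dimension $n$ replaced by the size of the support of $\vec{x}$; this replacement is exactly what upgrades the usual constant $n^{1/p-1/q}$ to the tighter factor $\|\vec{x}\|_0$. First I would dispose of the trivial case $\vec{x} = \vec{0}$, for which both sides vanish. Otherwise, set $k := \|\vec{x}\|_0$, the number of nonzero entries, and let $S = \{i : x_i \neq 0\}$, so that $|S| = k \ge 1$. Since the zero components contribute nothing to any $\ell^p$ norm, every norm in the statement may be computed by summing over $S$ only.

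The key step is the sharper intermediate bound
\[
\|\vec{x}\|_p \le k^{1/p - 1/q}\,\|\vec{x}\|_q .
\]
I would obtain this from H\"older's inequality applied on $S$ with the conjugate exponents $q/p$ and $q/(q-p)$ (both admissible since $1 \le p \le q$): writing $|x_i|^p = |x_i|^p \cdot 1$ and summing gives
\[
\|\vec{x}\|_p^p = \sum_{i \in S} |x_i|^p \le \Big(\sum_{i \in S} |x_i|^q\Big)^{p/q}\Big(\sum_{i \in S} 1\Big)^{(q-p)/q} = \|\vec{x}\|_q^p\, k^{(q-p)/q},
\]
and taking $p$-th roots yields the claimed factor $k^{(q-p)/(pq)} = k^{1/p-1/q}$. (When $p=q$ the exponent on $k$ is $0$ and the bound is the trivial identity, so the degenerate H\"older case needs no special care.)

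It then remains only to replace the exponent $1/p-1/q$ by $1$. Because $1 \le p \le q$ we have $0 \le 1/p - 1/q \le 1/p \le 1$, and since $k \ge 1$ the map $t \mapsto k^t$ is nondecreasing; hence $k^{1/p-1/q} \le k^1 = k = \|\vec{x}\|_0$. Chaining this with the intermediate bound gives $\|\vec{x}\|_p \le \|\vec{x}\|_0\,\|\vec{x}\|_q$, as required.

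I do not expect any serious obstacle: each step is a routine application of H\"older, and the only thing to watch is to track the support size $k$ rather than the ambient dimension $n$. The one genuine subtlety is the boundary exponent $q = +\infty$, which is in fact the case used for the P\&E penalty. The statement as written restricts to $q < +\infty$; for completeness I would note that the $q=\infty$ case follows either by letting $q \to \infty$ above (so $k^{1/p-1/q} \to k^{1/p}$) or directly from $\|\vec{x}\|_p^p = \sum_{i \in S}|x_i|^p \le k\,\|\vec{x}\|_\infty^p$ together with $k^{1/p} \le k$.
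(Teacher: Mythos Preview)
Your proof is correct and follows essentially the same route as the paper's: dispose of the zero vector, restrict attention to the support of $\vec{x}$ so that the ambient dimension is replaced by $k=\|\vec{x}\|_0$, invoke the standard inequality $\|\vec{x}\|_p \le k^{1/p-1/q}\|\vec{x}\|_q$, and then bound the exponent by $1$ using $k\ge 1$. The only cosmetic differences are that the paper cites the norm comparison as a known fact and handles the support reduction via a separate projection step, whereas you derive the comparison directly from H\"older on the support; your added remark on $q=\infty$ is a welcome complement since that is precisely the case used in the P\&E penalty.
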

\begin{proof}
The case $\vec{x} = \vec{0}$ is trivial. Suppose that $\forall i$, $x_i \ne 0$, then for a known result on the equivalence of norms in $\mathbb{R}^n$ \cite{10.5555/2422911} we know that $\|\vec{x}\|_p \le n^{(1/p - 1/q)}\|\vec{x}\|_q$, thus:
\begin{align}\label{eq:bounds}
  \|\vec{x}\|_p & \le n^{(1/p - 1/q)}\|\vec{x}\|_q \le n^{1/p}\|\vec{x}\|_q \le n\|\vec{x}\|_q\nonumber\\
                & = \|\vec{x}\|_0\|\vec{x}\|_q
\end{align}
Suppose now, without loss of generality that $\vec{x} = (x_1, \dots, x_m, 0, \dots, 0)^\top$, such that $\forall i \in \{1, \dots, m\}$, $x_i \ne 0$ . Consider its projection $\vec{x}'$ onto the axes $1, \dots, m$, then $\forall p \ge 0$, $\|\vec{x}\|_p = \|\vec{x}'\|_p$. Thus Equation \ref{eq:bounds} holds since:
\begin{equation*}
\|\vec{x}\|_p = \|\vec{x}'\|_p \leq  m\|\vec{x}'\|_q = \|\vec{x}\|_0\|\vec{x}\|_q
\end{equation*}
\end{proof}

\subsection{Threat Algorithm}
The approach we used to optimize Program (\ref{opt:zeroInfty}), takes its inspiration from Genetic Algorithms (GA) \cite{DBLP:books/aw/Goldberg89}. These methods nicely fit our black-box setting since they do not require any particular property on the function to be optimized. Furthermore, our algorithm possesses solid convergence properties. In Section \ref{section:convergence_properties} we show that our algorithm is an instance of the \emph{Abstract Genetic Algorithm} (AGA), as presented in \cite{10.1007/BFb0029725,DBLP:conf/ppsn/EibenAH90}, and we give a proof of its convergence.

An additional constraint, usually imposed in real-world scenarios, is represented by the limited number of queries that can be performed to the algorithm under attack \cite{DBLP:conf/gecco/AlzantotSCZHS19}. Classical approaches in GAs usually create large, fixed-size populations at each generation, and this, in turn, requires to compute the fitness score multiple times, querying $\C$ for each individual in the population, thereby making the process query-inefficient. To address this issue, we propose a growing size population approach. We start with a population $\Theta$ of size equal to $1$ and, generation by generation, we grow it by producing a new individual. To still harness the explorative power of GAs, we use a high mutation rate, and we allow the population set $\Theta$ to grow by keeping trace of all the previously computed individuals. In the case of memory-aware applications, our method can be extended by controlling the size of $\Theta$, in particular, by pruning low-fitness candidates. However, in our experiments, we adopted a different technique aimed to speed up the convergence of the optimization algorithm by reducing the number of generations (cf. Section \ref{section:improvement}).
\begin{algorithm}[H]
 \caption{Black-box poisoning}\label{alg:FCA}
 \begin{algorithmic}[1]
   \STATE {\bfseries Input: $\vec{X} \in \mathbb{R}^{n\times d}, \C, \delta, T, G, l$}
   \STATE \textbf{Output:} optimal adversarial mask $\vec{\epsilon}^*$
   \STATE
   \STATE Initialize $\vec{\epsilon}^{(0)}\in \mathcal{E}_{T,\delta}$ randomly
   \STATE $\Theta = \{\vec{\epsilon}^{(0)}\}$ 

   \STATE \FOR{$g = 0$ {\bfseries to} $G - 1$}
        \STATE $\vec{\epsilon}_{ch}^{(g + 1)} = \texttt{choice}(\Theta, l)$
        \STATE $\vec{\epsilon}_{cr}^{(g + 1)} = \texttt{crossover}(\vec{\epsilon}^{(g)},\vec{\epsilon}_{ch}^{(g + 1)})$
        \STATE $\vec{\epsilon}^{(g + 1)} = \texttt{mutation}(\vec{\epsilon}_{cr}^{(g + 1)}, \delta, T)$
        \STATE $\Theta = \Theta \cup \{\vec{\epsilon}^{(g + 1)}\}$
   \ENDFOR
   \STATE {\bfseries return:} $\vec{\epsilon}^* = \argmin_{\vec{\epsilon} \in \Theta} l(\vec{\epsilon})$
\end{algorithmic}
\end{algorithm}
Algorithm \ref{alg:FCA} describes our optimization approach. It takes as input the feature matrix $\vec{X}$, the clustering algorithm $\C$, the target samples $T$, the maximum attacker's power $\delta$, the total number of generations $G$ (the attacker's budget in term of queries) and the attacker's objective function $l$ (which in our case is the one defined in Program (\ref{opt:zeroInfty})). The resulting output is the optimal adversarial noise mask $\vec{\epsilon}^*$ that minimizes $l$. At each generation, a new adversarial mask $\vec{\epsilon}^{(g + 1)}$ is generated and added to a population set $\Theta$ containing all previous masks.

The core parts of our optimization process are the stochastic operators -- \texttt{choice}, \texttt{crossover} and \texttt{mutation} -- that we use for crafting new candidate solutions with a better fitness score. In the following, we describe their implementation.

\paragraph{Choice} The choice operator is used to decide which candidates will be chosen to generate offspring. We adopt a roulette wheel approach \cite{DBLP:books/aw/Goldberg89}, where only one candidate is selected with a probability proportional to its fitness score, which in turn is \emph{inversely} proportional to the attacker's objective function $l$. Given a candidate $\vec{\epsilon}^{(i)}$, its probability to be chosen for the production process $p(\vec{\epsilon}^{(i)})$ is equal to:
\begin{equation}\label{eq:fitness}
  p(\vec{\epsilon}^{(i)}) = \frac{\exp(-l(\vec{\epsilon}^{(i)}))}{\sum_{\vec{\epsilon} \in \Theta} \exp(-l(\vec{\epsilon}))}
\end{equation}
We remark that our choice operator picks just one adversarial noise mask that is then used in the crossover step.

\paragraph{Crossover} The crossover operator simulates the reproduction phase, by combining different candidate solutions (parents) for generating new ones (offspring). Commonly, crossover operators work with binary-valued strings, however, since our candidates are matrices in $\mathcal{E}_{T,\delta}$, we propose a variant. Given two candidates $\vec{\epsilon}^\prime, \vec{\epsilon}^{\prime\prime} \in \mathcal{E}_{T,\delta}$, the new offspring is generated starting from $\vec{\epsilon}^\prime$, then with probability equal to $p_c$ each entry $i,j$ is swapped with the entry $i,j$ in $\vec{\epsilon}^{\prime\prime}$. The crossover operator has probability $p_c$ of being applied; in the case of failure, $\vec{\epsilon}^\prime$ itself is chosen as an offspring.

\paragraph{Mutation} The mutation is a fundamental operator, usually applied to the offspring generated by the crossover, to introduce genetic variation in the current population. Our operator mutates each entry $\epsilon_{ij}$ s.t. $i \in T$ with probability $p_m$ by adding an uniformly distributed random noise in the range $[-\delta, \delta]$. The resulting perturbation matrix, is subsequently clipped to preserve the constraints dictated by $\mathcal{E}_{T,\delta}$.

Moreover, to enforce the low attacker's effort desiderata, we also perform zero-mutation, meaning that each entry of the mask is set to zero with probability $p_z$.

\subsection*{Time complexity analysis} ~ \label{time_complexity}
In this section we provide a time complexity analysis for Algorithm \ref{alg:FCA}. 
In step 8, the objective function is computed, requiring, in turn, to execute the clustering algorithm $\mathcal{C}$ with complexity $O( \mathcal{C}(nd) )$.
Step 9 performs a cross-over between two adversarial masks, in $O(nd)$ time.
The mutation of Step 10 is similarly computed in $O(nd)$ time. The overall time complexity is, thus, given by $O(G(\mathcal{C}(nd) + 2nd) ) = O(G \mathcal{C}(nd))$, with G equal to the number of generations. 
The complexity of the clustering algorithm $\mathcal{C}(nd)$ is a key point in the efficiency of the attack. As an example, considering $K$-means, we have a polynomial-time of $O(G(ndKt + 2n)) = O(GndKt)$, with $K$ being the number of clusters and $t$ the number of iterations for the clustering algorithm.

\subsection{Speeding up the convergence}~\label{section:improvement}
By just generating a new individual at each generation, our proposed method has the major drawback of being slow at converging. To counter this problem, inspired by the work of \cite{DBLP:journals/corr/GoodfellowSS14}, we decided to ``imprint'' a direction to the generated adversary mask to move the adversarial samples towards the target cluster. Since we lack the gradient information, the centroids information is leveraged instead.
We propose the following approach: each adversarial mask $\vec{\epsilon} \in \mathcal{E}_{T, \delta}$ is generated with the additional constraint that $\forall i,j$ $\epsilon_{ij} \ge 0$. After this, the mask is multiplied by a \emph{direction matrix} $\vec{\psi}$ with $\psi_{ij} = \sgn(c_j^{(t)} - c_j^{(v)})$, $\vec{c}^{(t)}$ and $\vec{c}^{(v)}$ being respectively the target and victim cluster centroids estimated from the victim data. The estimation is performed averaging the samples in the corresponding cluster. This variant can be easily implemented by changing the initialization of $\vec{\epsilon}^{(0)}$ and the mutation step only. It follows that the resulting adversarial attack space is now reduced to $\mathcal{E}_{T,\delta}^\prime \subset \mathcal{E}_{T,\delta}$. We still grant that the capability constraints are respected and the convergence properties hold, although the quality of the found optimum may be inferior.
In addition, we noticed that, without using this strategy,  the optimization algorithm was more sensitive to the choice of hyper-parameters. Therefore, we have decided to adopt this strategy, which makes our algorithm more efficient and less sensitive to the choice of hyper-parameters.

\subsection{Convergence properties}~\label{section:convergence_properties}
In general, GAs do not guarantee any convergence property \cite{DBLP:books/daglib/0017257}. However, under some more restrictive assumptions, it can be shown that they converge to an optimum. In this section, we show that our algorithm can be thought of as an instance of the Abstract Genetic Algorithm (AGA) as presented in \cite{DBLP:conf/ppsn/EibenAH90,10.1007/BFb0029725}. Subsequently, we give a proof of convergence. Below, we show the AGA pseudo-code:
\begin{algorithm}[H]
  \caption{Abstract Genetic Algorithm (pseudo-code)}\label{alg:aga}
  \begin{algorithmic}[1]
    \STATE Make initial population
    \STATE \WHILE{not stopping condition}
      \STATE \textbf{Choose} parents from population
      \STATE Let the selected parents \textbf{Produce} children
      \STATE Extend the population by adding the children to it
      \STATE \textbf{Select} elements of the extended population to survive for the next cycle
    \ENDWHILE
  \STATE \textbf{Output} the optimum of the population
  \end{algorithmic}
\end{algorithm}
In \cite{DBLP:conf/ppsn/EibenAH90,10.1007/BFb0029725}, the authors show that methods such as classical Genetic Algorithms and Simulated Annealing can be thought of as instances of Algorithm \ref{alg:aga}. Further, they prove their probabilistic convergence to a (global) optimum. Following the same theoretical framework, we show that our algorithm indeed satisfies all the conditions for convergence. Before doing so, we first present the framework and adapt our algorithm in order to comply with it.

Let $S$ be a set of candidates and $S^*$ be a set of finite lists over $S$, representing all the possible finite populations. A \emph{neighborhood function} is a function $N: S \rightarrow S^*$ that assigns neighbors to each individual in $S$. A \emph{parent-list}, is a list of candidates able to generate offspring, with $P \subseteq S^*$ denoting the set of all parent-list. In our algorithm, a population is represented by a list $[\vec{\epsilon}^{(0)}, \dots, \vec{\epsilon}^{(g)}]$, therefore $S = \AAS$, $S^* = \AAS^*$, $P = \{[\vec{\epsilon}^{(i)}, \vec{\epsilon}^{(j)}] \mid \vec{\epsilon}^{(i)}, \vec{\epsilon}^{(j)} \in \AAS\}$.

Let $f: X \rightarrow Y$ be a function belonging to $\mathcal{F}$, the set of all functions from $X$ to $Y$. Further, let $(\Omega, \mathcal{A}, \mathbb{P})$ be a probability space and $g: \Omega \rightarrow \mathcal{F}$ be random variable. We define the \emph{randomized} $f$ to be the function $f(\omega, x) = g(\omega)(x)$. Following this definition and \cite{DBLP:conf/ppsn/EibenAH90}, Algorithm \ref{alg:aga} can be then detailed as follows:
\begin{algorithm}[H]%
 \caption{Abstract Genetic Algorithm}\label{alg:agad}
 \begin{algorithmic}[1]
    \STATE Create an $x \in S^*$
    \STATE \WHILE{not stopping condition}
        \STATE draw $\alpha$, $\beta$ and $\gamma$
        \STATE $q = f_c(\alpha, x)$
        \STATE $y = \bigcup_{z \in q} f_p(\beta, z)$
        \STATE $x^\prime = x \cup y$
        \STATE $x = f_s(\gamma, x^\prime)$
    \ENDWHILE
   \STATE output the actual population
\end{algorithmic}
\end{algorithm}%
with $f_c: A \times S^* \rightarrow \mathcal{P}(P)$ being the \emph{choice function}, $f_p: B \times P \rightarrow \mathcal{P}(S)$ being the \emph{production function} and $f_s: C \times S^* \rightarrow S^*$ being the \emph{selection function}. In our case, we define:
\begin{enumerate}[(a)]
  \item $f_c(\alpha, x) = \{[\texttt{choice}(\alpha, x), x_{-1}]\},\ \forall \alpha \in A$
  \item $f_p(\beta, [s_1, s_2]) = \texttt{mutation}(\beta, \texttt{crossover}(\beta, s_1, s_2))),\ \forall \beta \in B$
  \item $f_s(\gamma, x^\prime) = x^\prime$ (Note that our selection is deterministic)
\end{enumerate}
Where $x_{-1}$ is the most recent candidate in the population. In the above pseudo-code, we have explicitly stated the randomization of our procedures $\texttt{choice}$, $\texttt{mutation}$, $\texttt{crossover}$ for clarity. The stochastic processes regulating the drawings of $\alpha$, $\beta$, and $\gamma$ always maintain the same distributions regardless of the current generation, meaning that the probability of generating a new population $x_{new}$ from another one $x_{old}$ does not change over the generations.

We now introduce and extend some definitions presented in \cite{DBLP:conf/ppsn/EibenAH90}:
\begin{enumerate}
  \item A neighborhood structure is \emph{connective} if: $\forall s \in S, \forall t \in S: s \mapsto t$, where $\mapsto$ stands for the transitive closure of the relation $\{(s, t) \in S \times S \mid t \in N(s)\}$.
  \item A choice function is \emph{generous} if: (a) $\{[s, t] \mid s, t \in S\} \subseteq P$ and (b) $\forall x \in S^*,\ \forall s_1, s_2 \in x: \mathbb{P}([s_1, s_2] \in f_c(\alpha, x)) > 0$.
  \item A production function is \emph{generous} if: $\forall s_1, s_2 \in S,\ \forall t \in N(s_1) \cup N(s_2): \mathbb{P}(t \in f_p(\beta, [s_1, s_2])) > 0$.
  \item A selection function is \emph{generous} if: $\forall x \in S^*,\ \forall s \in x: \mathbb{P}(s \in f_s(\gamma, x)) > 0$.
  \item A selection function is \emph{conservative} if: $M_x \cap f_s(\gamma, x) \ne \emptyset$, with $M_x = \{s \in x \mid \forall t \in x: f(s) \le f(t)\}$.
\end{enumerate}
In \cite{DBLP:conf/ppsn/EibenAH90} pag. 10, the authors further make a little technical assumption about the sets $A$, $B$, and $C$, requiring them to be countable, with positive probability for all their members. This is easily achieved in real applications considering the finiteness of the floating point representations.

Now we are ready to prove the following theorem:
\begin{theorem}
    Algorithm \ref{alg:agad} almost surely reaches a global optimum.
\end{theorem}
\begin{proof}
    Given the previous considerations, the following statements hold:
    \begin{enumerate}
        \item \emph{Our neighborhood structure is connective}: by the definition of our mutation operator, it holds that $N(\vec{\epsilon}^{(i)}) = \AAS, \forall \vec{\epsilon}^{(i)} \in \AAS$.
        \item \emph{Our choice function is generous}: this follows from (a) the definition of $P$, and from (b) the positivity of the softmax function in Equation 6.
        \item \emph{Our production function is generous}: See point 1.
        \item \emph{Our selection function is generous}: we allow all the candidates to survive with probability $1$.
        \item \emph{Our selection function is conservative}: see point 4.
    \end{enumerate}
    The proof then follows from Theorem $3$ in \cite{10.1007/BFb0029725}, adjusting the generousness definitions with our versions presented above. The globality of the optimum comes from the fact that our algorithm performs a global search, instead of a local one.
\end{proof}
The same conclusions can be drawn for the speed-up heuristic, by just replacing each instance of $\AAS$, with $\AAS^\prime$.


\section{Experimental results}
In this section, we present an experimental evaluation of the proposed methodology of attack.

\subsection{Robustness analysis}

We ran the experiments on three real-world datasets: FashionMNIST \cite{fashionMNIST}, CIFAR-$10$ \cite{cifar10} and $20$ Newsgroups \cite{DBLP:conf/icml/Lang95}. We focused our analysis on both two- and multiple-way clustering problems. For FashionMNIST and $20$ Newsgroups, we simulated the former scenario in which an attacker wants to perturb samples of one victim cluster $C_v$ towards a target cluster $C_t$. For CIFAR-$10$, we allowed the attacker to move samples from multiple victim clusters towards a target one by simply running multiple times our algorithm with a different victim cluster for each run. In the experiments, we chose $T$ to contain the $s|C_v|$ nearest neighbors belonging to the currently chosen victim cluster, with respect to the centroid of the target cluster. In particular, for FashionMNIST we used $20$ different values for $s$ and $\delta$, in the intervals $[0.01, 0.6]$ and $[0.05, 1]$ respectively; for CIFAR-$10$ we used $20$ different values for $s$ and $\delta$, in the intervals $[0.01, 0.6]$ and $[0.01, 1.5]$ respectively; for $20$ Newsgroups we used $15$ different values for $s$ and $\delta$, in the intervals $[0.01, 0.3]$ and $[0.001, 0.3]$ respectively.

We tested the robustness of three standard clustering algorithms: hierarchical clustering using Ward's criterion \cite{ward1963hierarchical}, $K$-means++ \cite{DBLP:conf/soda/ArthurV07} and the normalized spectral clustering \cite{DBLP:journals/pami/ShiM00} as presented in \cite{DBLP:journals/sac/Luxburg07}, with the \cite{DBLP:conf/nips/Zelnik-ManorP04} similarity measure. The code has been written in PyTorch \cite{pytorch} and it available at \footnote{https://github.com/Cinofix/poisoning-clustering}.

For the optimization program, we set $\lambda = \frac{1}{\alpha \cdot n\cdot d}$ with $\alpha = 255$ as penalty term for our cost function. In addition, in the optimization algorithm, we set the probability of having crossover $p_c = 0.85$, mutation $p_m = 0.05$ and zero-mutation $p_z = 0.001$. The total number of generations G, which correspond to the number of queries, was always set to $110$, using the heuristic proposed in Section \ref{section:improvement}.
In addition, we repeated these experiments for five times, reporting the mean with the standard error.

In Program (\ref{opt:zeroInfty}), we indicate with $\phi$ a function for measuring the similarity between two clustering partitions. In the literature, we can find several metrics used for the evaluation of clusterings \cite{ARI, stgh02b, vscore, AMI};
in addition, \cite{isDataClustering} proposed to adopt the following measure for the evaluation: $d(\vec{Y}, \vec{Y}') = \|\vec{Y}\vec{Y}^\top - \vec{Y}^\prime {\vec{Y}^\prime}^\top\|_F$, where $\|\cdot\|_F$ is the Frobenius norm, and $\vec{Y},\vec{Y}^\prime \in \mathbb{R}^{n \times k}$ are one-hot encodings of the clusterings $\C(\vec{X})$ and $\C(\vec{X} + \vec{\epsilon})$ respectively.
In our work, we decided to use the Adjusted Mutual Information (AMI) Score, proposed in \cite{AMI} since it makes no assumptions about the cluster structure and, as highlighted in \cite{DBLP:journals/jmlr/RomanoVBV16}, it works well even in the presence of unbalanced clusters. Indeed, the clustering partition over the poisoned dataset might also create unbalanced clusters, especially if the attacker wants to move samples only from one towards the others.

The AMI score between two clustering partitions $U$ and $V$ is given by:
\begin{equation}
    AMI(U,V) = {\frac{MI(U, V) - \mathbb{E}[MI(U,V)]}{\max{\{H(U), H(V)\}} - \mathbb{E}[MI(U, V)]}}
\end{equation}
where $MI(U, V)$ measures the mutual information shared by the two partitions, $\mathbb{E}[MI(U,V)]$ represents its expected mutual information and $\max{\{H(U), H(V)\}}$ is the maximum between the two entriopies, which is an upper bound for $MI(U, V)$. AMI is equal to $1$ when the two clustering partitions are identical, and $0$ when they are independent, that is, sharing no information about each other. 

The reader can refer to Section \ref{section:comparison_objective_function} where a comparison analysis between different similarity functions is offered.
\begin{figure*}[t]
  \includegraphics[width=0.32\textwidth]{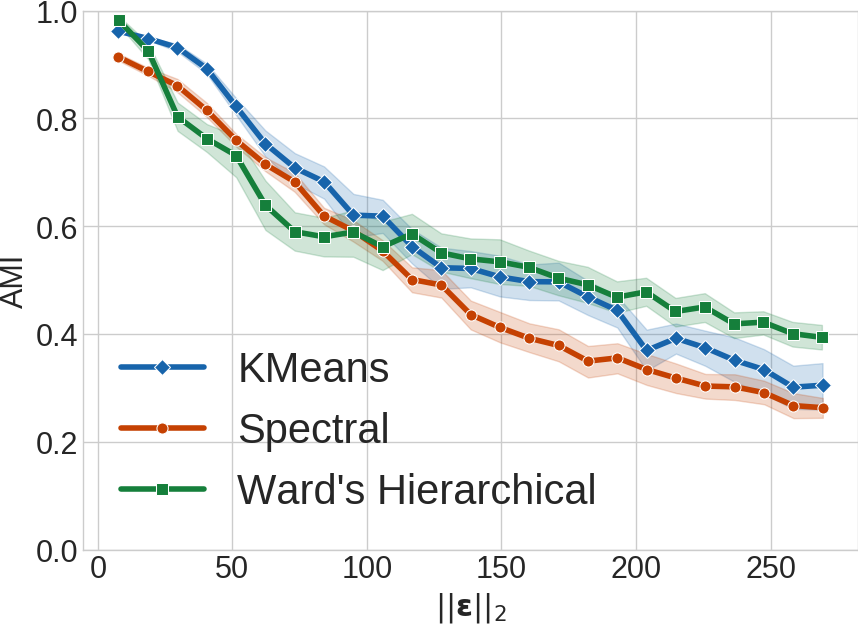}\quad
  \includegraphics[width=0.32\textwidth]{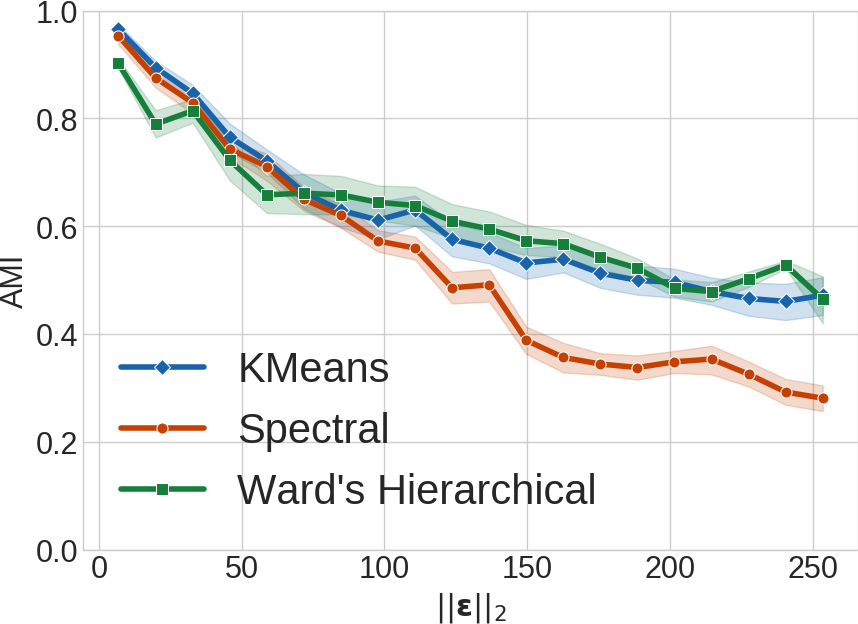}
  \includegraphics[width=0.32\textwidth]{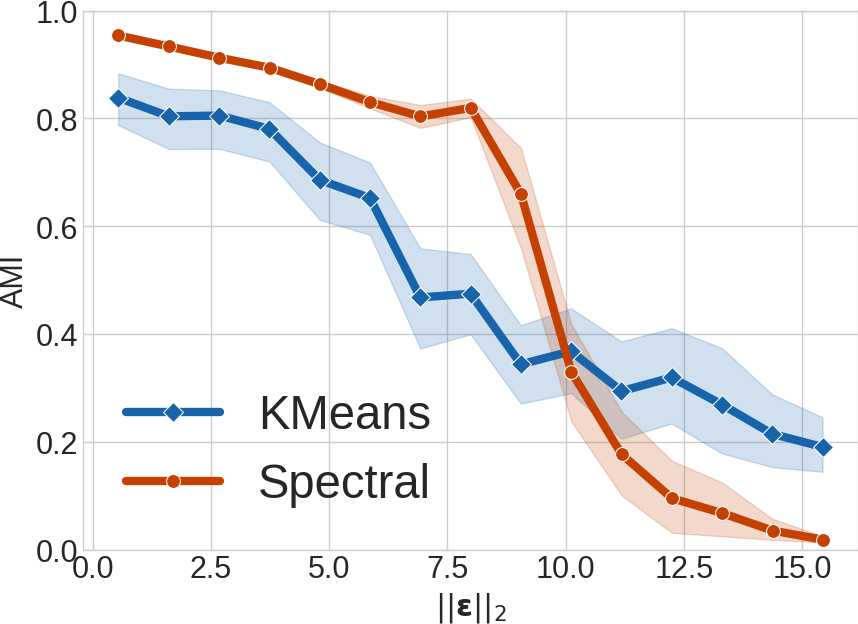}
  \caption{Robustness analysis with FashionMNIST (left), CIFAR-$10$ (middle), $20$ Newsgroups (right). The plots depict the decay of AMI by adversarially perturbing the datasets, with an increasing noise.}
  \label{fig:robustness}
\end{figure*}

\subsubsection{FashionMNIST}~\label{fashionSetting} The FashionMNIST contains $70\,000$ grayscale images of size $28 \times 28$ pixels \cite{fashionMNIST}. In our experiments we randomly sampled $800$ images for class \texttt{Ankle boot} (victim cluster) and $800$ for class \texttt{Shirt} (target cluster). 
In Figure \ref{fig:robustness} (left), we report the obtained results. We observe that the three algorithms have similar behavior and their clustering accuracy consistently decreases with the increment of the adversarial noise level. 
In this case, $K$-means++ shows better performance than spectral clustering, therefore the spectral embedding of data samples seems less robust than raw features only. This fact may suggest that some embedding procedures devised for improving clustering accuracy do not necessarily guarantee robustness against adversarial attacks. However, we reserve further discussion on this in future work.

\subsubsection{CIFAR-10} The CIFAR-$10$ contains $60\,000$ colour images of size $32\times 32$ pixels \cite{cifar10}. We randomly sampled $1\,600$ images from classes \texttt{airplane, frog} and \texttt{automobile}. We addressed the multi-way scenario by first moving samples from \texttt{airplane} and then from \texttt{frog}, always towards the target cluster \texttt{automobile}. Moreover, we used a ResNet50 for features extraction, and we performed clustering on the resulting feature space, obtaining better initial results in terms of AMI. In Figure \ref{fig:robustness} (middle), we show the performance of the three clustering algorithms under adversarial manipulations.
We observe that our attacks significantly decrease the clustering quality for the three algorithms. Even if the ResNet50 features allow cluster algorithms to achieve better performance, they are still vulnerable to adversarial noise.
Further, note how the gap in performance of spectral clustering and $K$-means++ has even increased when adopting a DNN-generated embedding.

We provide, in Figure \ref{fig:cifar_viz}, a visual representation of poisoning samples for CIFAR-10. We reconstructed the poisoning samples from the feature space using the feature collision strategy adopted in \cite{DBLP:conf/nips/ShafahiHNSSDG18}, where the target is exactly our poisoning sample.

\subsubsection{20 Newsgroups} The $20$ Newsgroups is a dataset commonly used for text classification and clustering, which contains $20\,000$  newspaper articles divided into $20$ categories. The experiments were conducted with two highly unrelated categories of news, \texttt{rec.sport.baseball} (victim cluster) and \texttt{talk.politics.guns} (target cluster). We applied the a combination of TF-IDF \cite{DBLP:journals/ipm/SaltonB88} and LSA \cite{doi:10.1080/01638539809545028} to embed features into a lower dimensional space. The resulting feature matrix had dimension $1\,400 \times 80$. In this case, we tested our method against two ensemble clustering algorithms, derived from $K$-means and spectral clustering algorithms (hierarchical clustering was not giving good enough clustering performance). The two algorithms use the Silhouette value \cite{ROUSSEEUW198753}, and the clustering with the maximum silhouette score is selected as the best one. In particular, we ran $20$ instances of the $K$-means algorithm with random centroids initializations, while, for spectral clustering, we ran $3$ instances of the algorithm proposed in \cite{DBLP:journals/sac/Luxburg07} with $3$ different similarity measures. Given a sample pair $\vec{x}_i$ and $\vec{x}_j$, the measures are:
\begin{align}
    s_{ij} & = \frac{\vec{x}_i^\top \vec{x}_j}{\|\vec{x}_i\|_2 \|\vec{x}_j\|_2}\label{cosine}\\
    s_{ij} & = \frac{(\vec{x}_i - \bar{\vec{x}})^\top
                (\vec{x}_j - \bar{\vec{x}})}
                {\|\vec{x}_i  - \bar{\vec{x}}\|_2 \|\vec{x}_j  - \bar{\vec{x}}\|_2}\label{corr}\\
    s_{ij} & = d_{max} - \|\vec{x}_i - \vec{x}_j\|_2\label{dist_max}
\end{align}
Equation \ref{cosine} represents the cosine similarity between two samples $\vec{x}_i$ and $\vec{x}_j$. Equation \ref{corr} is the Pearson correlation coefficient, with $\bar{\vec{x}}$ being the sample mean. Moreover, we introduced a sparsification technique, clamping to $0$ all negative values, which improved the clustering performance. Finally, in Equation \ref{dist_max} we define $d_{max} = \max_{ij} \|\vec{x}_i - \vec{x}_j\|_2$.\\

Figure \ref{fig:robustness} (right) reports the performance of two clustering algorithms under adversarial manipulation. Ensemble methods are known to be more robust against random noise with respect to the normal behavior of the corresponding algorithms \cite{DBLP:journals/jair/OpitzM99, DBLP:conf/icdm/NguyenC07}; however, our attacking model was able to fool them and significantly decreased their clustering performance. In this case, in low noise regime, spectral clustering seems to benefit the ensembling technique, however its behavior follows previous experiments.

\begin{figure}[!t]
  \centering
  \includegraphics[width=0.42\columnwidth]{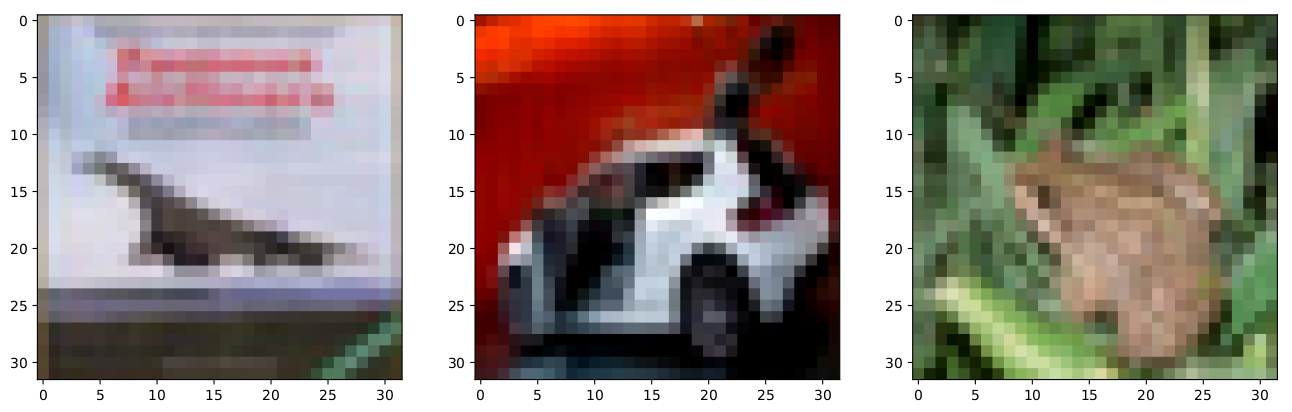}
  
  \includegraphics[width=0.42\columnwidth]{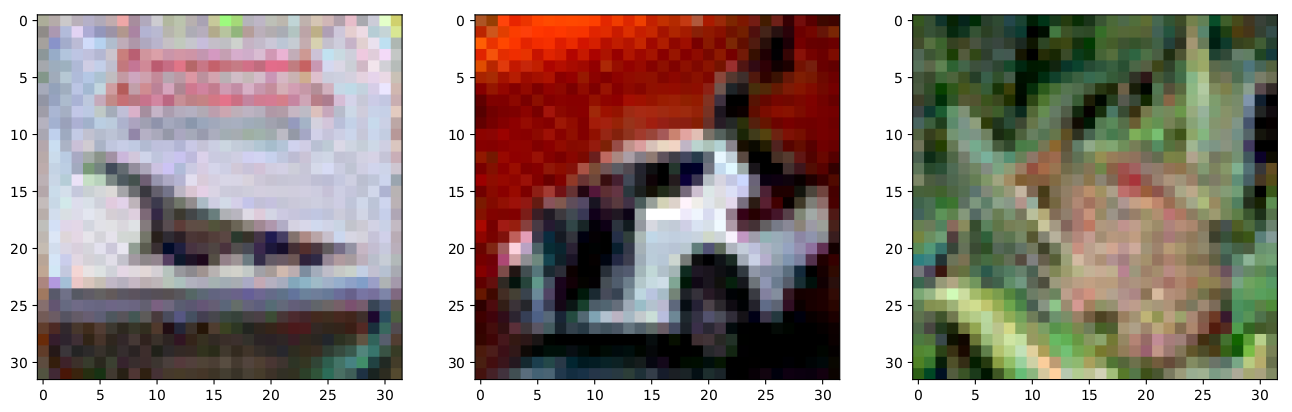}
  \caption{(Top row) clean samples from CIFAR-10. (Bottom row) the corresponding poisoning samples with $\delta$ = $0.1$.}
  \label{fig:cifar_viz}
\end{figure}

\subsubsection{Choice of the cost function}\label{section:comparison_objective_function}
In our optimization program, we employed the AMI both for the cost function $\phi$ and the evaluation measure. Indeed, our optimization being not derivative-dependent, we could adopt the same function for the optimization and evaluation tasks.
\begin{figure}[!t]
  \centering
  \includegraphics[width=0.42\columnwidth]{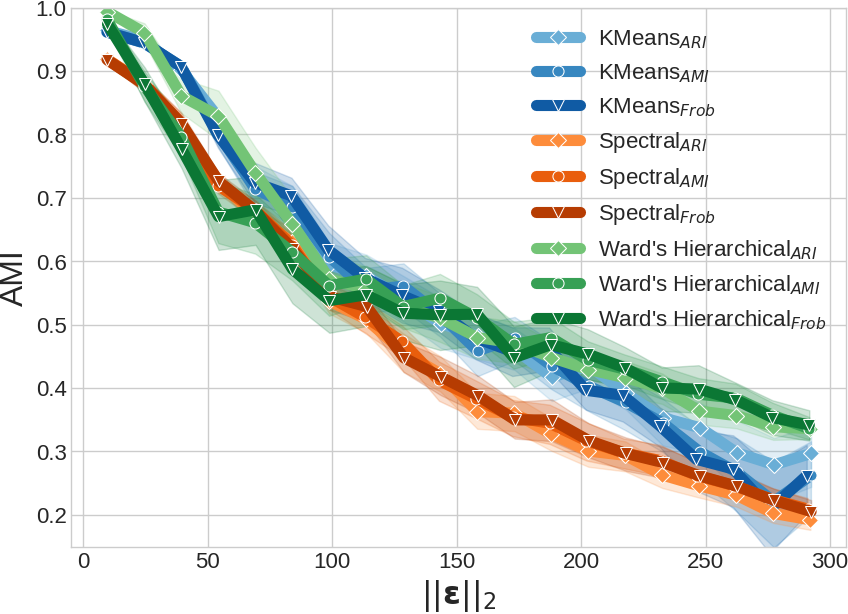}\quad
  \includegraphics[width=0.42\columnwidth]{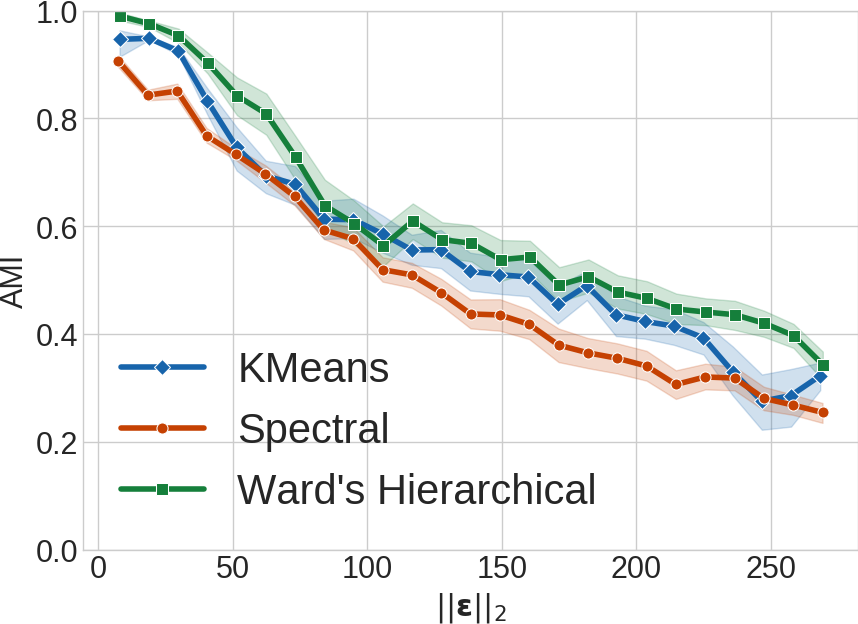}
  \caption{Robustness analysis on FashionMNIST by changing the similarity measure $\phi$ (left), and crafting the poisoning samples employing a surrogate dataset (right).}
  \label{fig:ablation}
\end{figure}
We decided to analyze the impact of the clustering similarity function $\phi$, with FashionMNIST, and see if there were significant differences among each other. In Figure \ref{fig:ablation} (left), shows the variation of results with $\phi$ equals to ARI \cite{ARI}, AMI \cite{AMI} and the negated distance proposed in \cite{isDataClustering} (referred as ``Frob''). The plot shows no substantial difference among the choices of $\phi$, suggesting that this hyperparameter does not significantly impact the optimization process.

\subsubsection{Surrogate data}
We ran additional experiments on a more challenging scenario by relaxing the knowledge assumption of the target data. In this scenario, the attacker does not have access to the target data and can only sample a surrogate dataset from the same distribution. To this end, we have randomly sampled two subsets of 1600 images, each extracted from FashionMNIST as detailed in Section \ref{fashionSetting}. We use the first one to create the poisoning samples and then evaluate their effectiveness on the other one.  Figure  \ref{fig:ablation}  (right) shows that our attack is strong enough to decrease the clustering performance even when the attacker has no access to the target dataset.

\subsection{Comparison}  ~ \label{sec:comparison}
To the best of our knowledge, the only work dealing with adversarial clustering in a black-box way is \cite{suspicion}. In this work, the authors presented a new type of attack called \emph{spill-over}, in which the attacker wants to assign as many samples as possible to a wrong cluster by poisoning just one of them. They proposed a threat model against linearly separable clusters to generate such kind of adversarial noise.

\begin{table}[t]
\renewcommand{\arraystretch}{0.7}
\centering
\begin{small}
    \begin{sc}
    \begin{tabular}{lcccr}
      \toprule
      Method & $\|\vec{\epsilon}\|_0$ & $\|\vec{\epsilon}\|_2$ & $\|\vec{\epsilon}\|_\infty$ & \#Miss-clust \\
      \midrule
      Spill-over            &   413 &  872.8 & 146.8 & $2$\\
      Spill-over$_{clamp}$  &   412 &  828.2 & 146.8 & $2$\\

      
      Ours $(\delta = 73.43)$ & $151 \pm 19.2$ & $551.9 \pm 36.3$ & $73.4 \pm 1.4$ & $12.0 \pm 0.0$\\
      Ours $(\delta = 146.87)$ & $30 \pm 7.6$ & $479.1 \pm 62.5$ & $145.0 \pm 3.7$ & $12.0 \pm 0.0$\\
      Ours $(\delta = \infty)$ & $29 \pm 13.8$ & $757.1 \pm 203.2$ & $246.25\pm 16.8$ & $14.3 \pm 2.4$\\
      \bottomrule
    \end{tabular}
      \end{sc}
\end{small}
\caption{Comparison on MNIST with digits 3\&2.}
\label{tab:mnist32_comp}
\end{table}

\begin{table}[t]
\renewcommand{\arraystretch}{0.7}
\centering
\begin{small}
    \begin{sc}
    \begin{tabular}{lcccr}
      \toprule
      Method & $\|\vec{\epsilon}\|_0$ & $\|\vec{\epsilon}\|_2$ & $\|\vec{\epsilon}\|_\infty$ & \#Miss-clust  \\
      \midrule
      Spill-over            &   152 &    585.3 & 159.7 &  11\\
      Spill-over$_{clamp}$  &   151 &  463.2 & 131.7 & 9\\

      Ours $(\delta=79.86)$ & $117 \pm 7.8$ & $528.4 \pm 30.1$& $79.8 \pm 2.8$& $9.1 \pm 0.4$\\
      
      Ours $(\delta=159.72)$ & $75 \pm 22.4$ & $782.7 \pm 124.2$& $ 159.3 \pm 1.3$ & $12.0\pm 4.5$\\
      
      Ours $(\delta = \infty)$ & $46 \pm 19.4$ & $902.7 \pm 205$ & $248.3 \pm 8.8$& $14.6 \pm 4.5$\\
      \bottomrule
    \end{tabular}
  \end{sc}
\end{small}
\caption{Comparison on MNIST with digits 4\&1.}
\label{tab:mnist41_comp}
\end{table}
To have a fair comparison, we performed \emph{spill-over} attacks on the same settings of the aforementioned work, comparing the performance on MNIST and UCI Handwritten Digits datasets\footnote{A dataset containing $5\,620$ grayscale images of size $8 \times 8$, with intensities in the range $[0, 16]$} \cite{alpaydin1995a}, targeting Ward's hierarchical clustering algorithm. Further details can be found in Appendix.

For MNIST, we considered the digit pairs $4\&1$ and $3\&2$, while for Digits, we considered the digit pairs $4\&1$, and $8\&9$. Our algorithm was run with the $\delta = \Delta$ which is the maximum acceptable noise threshold found by the authors, with $\delta = \Delta / 2$ and with $\delta = \infty$. We found the value of $\Delta$ used in \cite{suspicion} by looking at the source code.
We imposed to attack just one sample ($|T| = 1$), namely the nearest neighbor to the centroid of the target cluster. We performed our experiments $20$ times, reporting mean and std values.
\begin{table}[!t]
\renewcommand{\arraystretch}{0.7}
\centering
\begin{small}
  \begin{sc}
    \begin{tabular}{lcccr}
      \toprule
      Method & $\|\vec{\epsilon}\|_0$ & $\|\vec{\epsilon}\|_2$ & $\|\vec{\epsilon}\|_\infty$ & \#Miss-clust  \\
      \midrule
      Spill-over & 54 & 15.70 & 9.44 & 21\\
      Spill-over$_{clamp}$  &   54 &  15.70 & 9.44 & 21\\

      Ours $(\delta = 4.72) $ & $12 \pm 1.20$ & $11.49 \pm 1.25$ & $4.7 \pm 0$ & $21 \pm 0.0$\\
      Ours $(\delta = 9.44)$ & $7 \pm2.85$ & $13.86 \pm 2.96$ & $8.12 \pm 1.24$ & $21 \pm 0.0$\\
      Ours $(\delta = \infty)$ & $4 \pm 1.74$ & $15.18\pm 3.16$ & $10.94 \pm 1.49$ & $21 \pm 0.0$\\
      \bottomrule
    \end{tabular}
      \end{sc}
\end{small}
\caption{Comparison on Digits with digits 8\&9.}
\label{tab:digit89_comp}
\end{table}
\begin{table}[!t]
\renewcommand{\arraystretch}{0.7}
\centering
\begin{small}
  \begin{sc}
    \begin{tabular}{lcccr}
      \toprule
      Method & $\|\vec{\epsilon}\|_0$ & $\|\vec{\epsilon}\|_2$ & $\|\vec{\epsilon}\|_\infty$ & \#Miss-clust  \\
      \midrule
      Spill-over & 14 & 23.93 & 11.89 & 24\\
      Spill-over$_{clamp}$  &  11 &  16.28 & 9.93 & 21\\

      Ours $(\delta = 5.94)$ & $13 \pm 1.70$ & $16.27 \pm 1.20$ & $5.94 \pm 0.0$& $24 \pm 0.0$\\
      Ours $(\delta = 11.89)$ & $7 \pm 2.03$ & $19.84 \pm 1.96$ & $11.13 \pm 0.79$ & $24 \pm 0.0$\\
      Ours $(\delta = \infty)$ & $7 \pm 2.36$ & $21.06 \pm 2.36$ & $12.79 \pm 4.34$ & $24 \pm 0.0$\\
      \bottomrule
    \end{tabular}
  \end{sc}
\end{small}
\caption{Comparison on Digits with digits 4\&1.}
\label{tab:digit41_comp}
\end{table}
The results are presented in Table \ref{tab:mnist32_comp}-\ref{tab:digit41_comp} along with more details on the experiments. Although our algorithm achieves its best performance by moving more samples at once, we were able to match, or even exceed, the number of spill-over samples (\texttt{\#Miss-clust}) achieved in \cite{suspicion}, even when halving the attacker's maximum power proposed by the authors. Moreover, the results show also that we were able to craft adversarial noise masks $\vec{\epsilon}$, which were significantly less detectable in terms of $\ell_0, \ell_\infty$.

In Table \ref{tab:comp_seed}-\ref{tab:comp_mocap}, we report a comparison for the $K$-means++ algorithm with UCI Wheat Seeds \cite{10.1007/978-3-642-13105-9_2} and MoCap Hand Postures \cite{DBLP:conf/cvpr/GardnerKDS14} dataset, repeating the same experimental setting of \cite{suspicion}.  We obtain the same number of spill-over samples (\texttt{\#Miss-Clust}) with significant lower Power \& Effort.

Finally, in Figure \ref{suspicion_mnist32} and \ref{suspicion_digits14}, we show a qualitative assessment of the crafted adversarial spill-over samples. Note that the crafted adversarial examples of \cite{suspicion} do not preserve box-constraints commonly adopted for image data. Indeed, pixel intensities exceed $255$ and $16$ for MNIST and Digits, respectively. We also evaluated the performance of \cite{suspicion} by clamping the resulting adversarial examples (\texttt{Spill-over$_{clamp}$}), and we observe that the number of spill-over samples is reduced.
\begin{table}[t]
\renewcommand{\arraystretch}{0.7}
\centering
\begin{small}
    \begin{sc}
    \setlength\tabcolsep{3.pt}
    \begin{tabular}{lcccr}
      \toprule
      Method & $\|\vec{\epsilon}\|_0$ & $\|\vec{\epsilon}\|_2$ & $\|\vec{\epsilon}\|_\infty$ & \#Miss-clust \\
      \midrule
      Spill-over  &   7 &  0.42 & 0.30 & $2$\\
      Ours $(\delta = 0.15)$  & $3 \pm 0.79$ & $0.14 \pm 0.04$ & $0.10 \pm 0.03$ & $2.0 \pm 0.0$\\
      Ours $(\delta = 0.30)$ & $3 \pm 0.76$ & $0.28 \pm 0.09$ & $0.21 \pm 0.06$ & $2.0 \pm 0.0$\\
      \bottomrule
    \end{tabular}
  \end{sc}
\end{small}
\caption{Comparison for Seeds.}
\label{tab:comp_seed}
\end{table}
\begin{table}[t]
\renewcommand{\arraystretch}{0.7}
\centering
\begin{small}
  \begin{sc}
    \setlength\tabcolsep{3.pt}
    \begin{tabular}{lcccr}
      \toprule
      Method & $\|\vec{\epsilon}\|_0$ & $\|\vec{\epsilon}\|_2$ & $\|\vec{\epsilon}\|_\infty$ & \#Miss-clust  \\
      \midrule
      Spill-over  &   9 &  44.42 & 20.0 & $5$\\
      
      Ours $(\delta = 10)$ & $1 \pm 0.48$ & $5.13 \pm 1.86$ & $5.0 \pm 1.86$ & $5.0 \pm 0.0$\\
      Ours $(\delta = 20)$ & $1 \pm 1.14$ & $8.50 \pm 6.61$ & $7.74 \pm 5.18$ & $5.0 \pm 0.0$\\
      \bottomrule
    \end{tabular}
  \end{sc}
\end{small}
\caption{Comparison for MoCap Hand Postures.}
\label{tab:comp_mocap}
\end{table}

\begin{figure}[!t]
\centering
  \includegraphics[width=0.55\columnwidth]{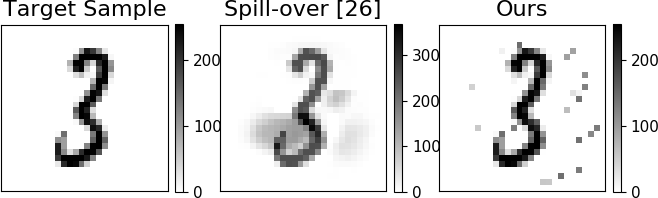}%
  \caption{Spill-over samples for MNIST. The target sample (left), the corresponding adversarial sample crafted with the attack proposed in \cite{suspicion} (middle), and our adversarial sample with
  $\delta=146.87$} (right).
  \label{suspicion_mnist32}%
\end{figure}
\begin{figure}[!t]
\centering
  \includegraphics[width=0.55\columnwidth]{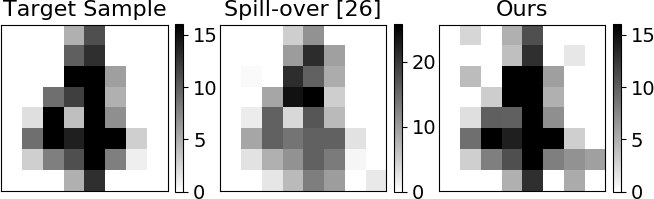}%
  \caption{Spill-over samples for Digits. The target sample (left), the corresponding adversarial sample crafted with the attack proposed in \cite{suspicion} (middle), and our adversarial sample with $\delta=11.89$ (right).} 
  \label{suspicion_digits14}
\end{figure}

In conclusion, \cite{suspicion} aims to find $\Delta$, which does not lead to the attack being considered an outlier using the , Coordinate-wise Min-Mahalanobis-Depth (COMD) measure, at the expense of existing box-constraints. Whereas our purpose consists of proposing an algorithm that can effectively corrupt a black-box clustering algorithm's performance by minimizing the attacker's power and effort (P\&E). Indeed, our attacks, as shown in Table \ref{tab:digit89_comp}-\ref{tab:comp_mocap}, show lower $\ell_0$ and $\ell_\infty$ compared to the attack obtained with \cite{suspicion}. These results suggest that our algorithm can craft effective poisoning attacks, even stronger than \cite{suspicion}, with less P\&E and satisfying the box-constraints

\subsubsection*{Ablation study}
We provide an ablation study for the mutation and zero rate parameters,  $p_m$ and $p_z$ respectively, keeping the crossover rate $p_c$ set to $0.85$ for the two pairs of MNIST digits. The high crossover rate is a common choice in Genetic Algorithms \cite{why_high_cr_ahmad}. We use the same setting described in Section \ref{sec:comparison} with dataset MNIST 3\&2 and MNIST 4\&1.

\begin{figure*}[t]
\centering
  \includegraphics[width=1\textwidth]{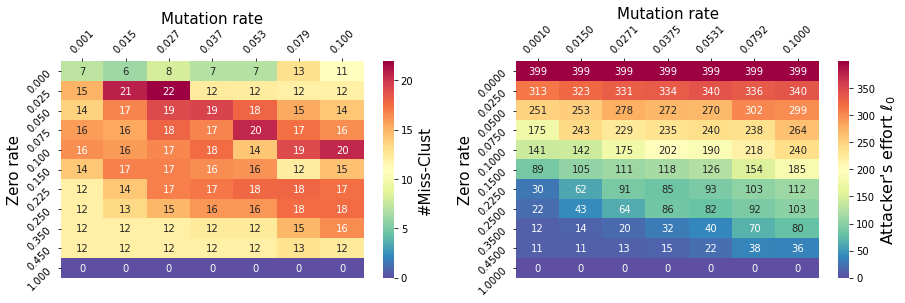}

  \caption{Ablation study for mutation rate (x-axis) and zero rate (y-axis) for MNIST 3\&2. (left) Number of miss clustered samples, (right) attacker’s effort, i.e., number of manipulated pixels.}
  \label{fig:mnist32_ablation}
\end{figure*}
\begin{figure*}[t]
\centering
  \includegraphics[width=1.\textwidth]{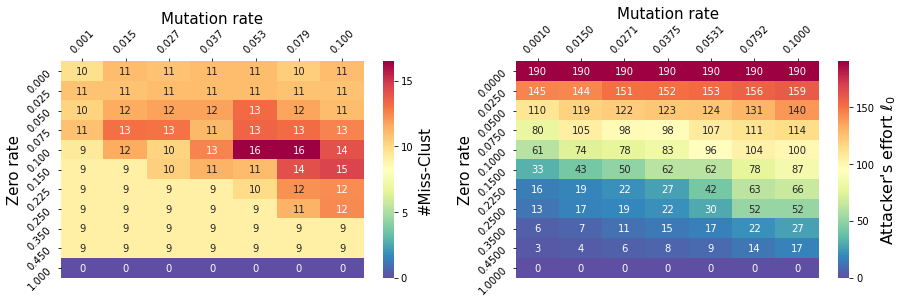}
  \caption{Ablation study for mutation rate (x-axis) and zero rate (y-axis) for MNIST 4\&1. (left) number of miss clustered samples, (right) attacker’s effort, i.e., number of manipulated pixels.}
  \label{fig:mnist41_ablation}
\end{figure*}

Fig. \ref{fig:mnist32_ablation} and Fig. \ref{fig:mnist41_ablation} reveal how the two hyperparameters affect the attacker's effort and the strength of the attacks created. An increment of the zero rate implies attacks with less effort, while an increment of the mutation inverts this tendency thus generating more powerful attacks.
From Figure \ref{fig:mnist32_ablation}-\ref{fig:mnist41_ablation} (left), we observe that the yellow regions at the bottom left determine a good compromise on having a small effort and good attack effectiveness. However, by increasing the attacker's effort, we can generate even more effective attacks. The results of the ablation study with the two pairs of digits are similar, suggesting that the same set of hyperparameters can be chosen without significant differences, as also suggested by the wide bottom-left regions of Figure \ref{fig:mnist32_ablation}-\ref{fig:mnist41_ablation} (left) where the number of miss-clustered observations is constant. This behavior suggests that we do not really need an extensive hyperparameters tuning procedure to obtain effective poisoning attacks.

The results obtained against \cite{suspicion} use a combination of hyperparameters that allows the attacker's effort to be kept low while still maintaining outstanding comparison results. However, a better choice of hyperparameters would have allowed us to further improve our results in terms of miss-clustered points and the attacker's effort.

\subsection{Empirical convergence}
In addition to the theoretical results on convergence provided in Section \ref{section:convergence_properties}, we propose an empirical analysis of convergence. In particular, for a pre-set configuration of $\delta$ and $s$, we performed a series of attacks on the FashionMNIST dataset, with an increasing number of generations/queries, evaluating the trend of objective function presented Program (\ref{opt:zeroInfty}). The results are reported in Figure \ref{fig:convergence}. It can be seen that our algorithm requires a relatively low number of queries to converge to a minimum, with the exception of $K$-means++ that presents a slower convergence than spectral clustering, most probably due to the nature of the feature embeddings used.

\subsection{Transferability}
In \cite{transferability}, the authors defined the concept of \emph{transferability} of adversarial examples. In particular, an adversarial example generated to mislead a model $f$ is said to be transferable if it can mislead other models $f^\prime$. It was further observed that if the attacker has limited knowledge about the model under attack, she may train a substitute model, craft adversarial samples against it, and then devise them to fool the target model.

The authors analyzed this property only between classification algorithms. We extend this analysis showing that even adversarial samples crafted against clustering algorithms are suitable and can be transferred to fool supervised models successfully. To the best of our knowledge, this is the first work that proposes an analysis of transferability between unsupervised and supervised algorithms.
\begin{figure}[!t]
\centering
  \includegraphics[width=0.42\columnwidth]{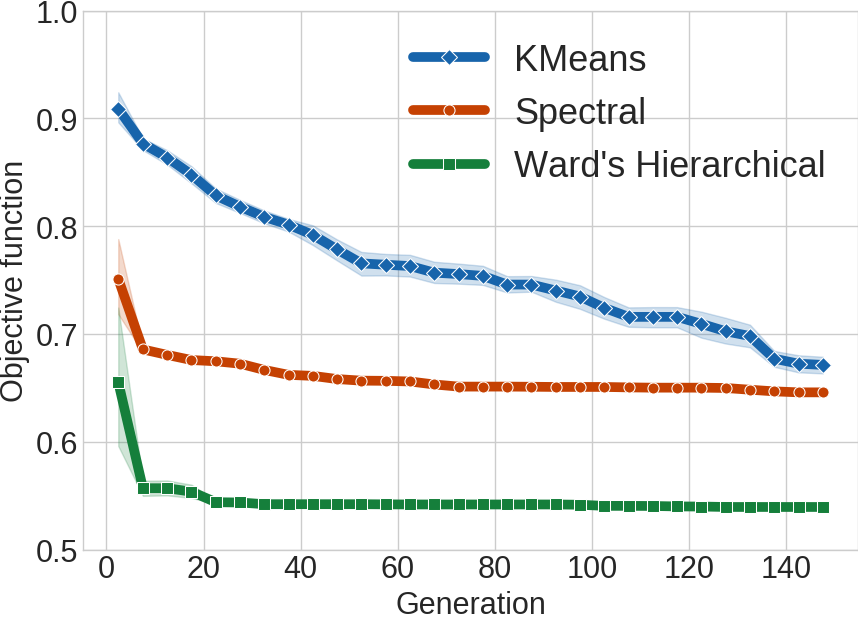}%
  \caption{Convergence of objective function on FashionMNIST. $\delta = 0.2$, $s = 0.25$.}%
  \label{fig:convergence}
\end{figure}
\begin{figure}[!t]
  \centering
  \includegraphics[width=0.42\columnwidth]{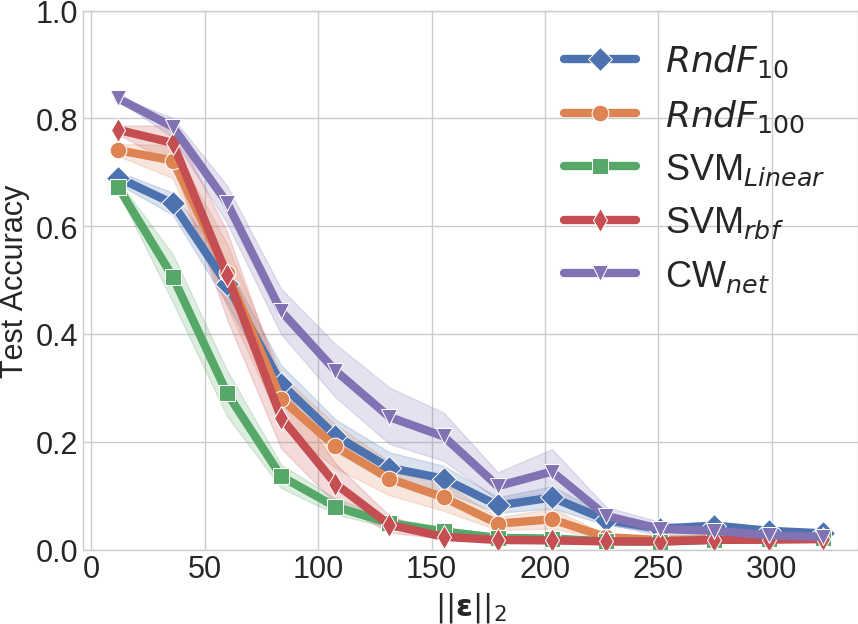}
  \caption{Transferability of adversarial examples against supervised classification models. By progressively incrementing the P\&E of the perturbation matrix, our algorithm crafts adversarial samples that effectively decrease the performance of the algorithm evaluated on the two target classes.}
  \label{fig:transf}
\end{figure}
We evaluated the transferability properties of our noise by attacking the $K$-means++ algorithm on $2\,000$ testing samples taken from labels FashionMNIST (\texttt{Ankle boot}, \texttt{Shirt}). In particular, we used the crafted adversarial samples to test the robustness of several classification models, trained on $60,000$ samples of FashionMNIST. The tested model are a linear and an RBF SVM \cite{DBLP:journals/datamine/Burges98}, two random forest \cite{DBLP:journals/ml/Breiman01} with $10$ and $100$ trees respectively\footnote{For both the SVM and the Forest, we used the implementation proposed in the scikit-learn \cite{pedregosa2011scikit} library.}, and the Carlini \& Wagner (C\&W) deep net proposed in \cite{CandW}, following the same training setting. Table \ref{table:accuracy} shows the test accuracy on the full dataset and only for classes \texttt{Ankle boot} and \texttt{Shirt}.
\begin{table}[!t]
\begin{footnotesize}
\centering
  \begin{sc}
    \setlength\tabcolsep{5.5pt}
  \renewcommand{\arraystretch}{0.8}
    \centering
    \begin{tabular}{lcc}
      \toprule
      Model & All classes & Two classes\\
      \midrule
      Linear SVM & 0.846 & 0.753\\
      RBF SVM & 0.882 & 0.8025\\
      R. Forest 10 & 0.856 & 0.739\\
      R. Forest 100 & 0.875 & 0.769\\
      C\&W net & 0.915 & 0.866\\
      \bottomrule
    \end{tabular}
  \end{sc}
  \caption{Test accuracies on FashionMNIST test dataset, for the chosen classifiers. The middle column presents the overall performance of the trained models. The right column presents the accuracy evaluated on the two target classes (\texttt{Ankle boot} and \texttt{Shirt}).}
  \label{table:accuracy}
\end{footnotesize}
\end{table}
We report the results in Figure \ref{fig:transf}, where the accuracy over the poisoned samples only is reported. The results show clear evidence on the transferability of our adversarial noise, crafted against $K$-means++, to the tested classifiers. Note further that the C\&W net is the most accurate model, performing slightly better than the RBF SVM, and the latter seems to be more sensitive to our adversarial noise.

\section{Conclusions and Future Work}
In this work, we have proposed a new black-box, derivative-free adversarial methodology to fool clustering algorithms and an optimization method inspired by genetic algorithms, able to find an optimal adversarial mask efficiently. We have conducted several experiments to test the robustness of classical single and ensemble clustering algorithms on different datasets, showing that they are vulnerable to our crafted adversarial noise. We have further compared our method with a state-of-the-art black-box adversarial attack strategy, showing that we outperform its results both in terms of attacker's capability requirements and misclustering error. Finally, we have also seen that the crafted adversarial noise can be applied successfully to fool supervised algorithms too, introducing a new transferability property between clustering and classification models.

In our work, we have brought attention to many possible topics of research, which we summarize in the following. First of all, since our proposed method can be easily adaptable to more challenging problems, we plan to address the evasion problem on supervised models. Furthermore, to better characterize the robustness against different kinds of datasets, we plan to analyze the relationship between the sparsity of the data and the robustness of the clustering algorithms. Finally, as the work considers only the clustering problem, our analysis can be extended to different unsupervised tasks, such as unsupervised image segmentation, widespread in sensible applications as well.

\bibliography{mybibfile}

\section*{Appendix} ~\label{appendix}

For the sake of completeness and reproducibility of the experimental setting, in the following, we report a detailed list of all hyperparameters used in our experiments. In particular Table \ref{tab:digits_comp_par} and \ref{tab:mnist_comp_par} present all the hyperparameters used in our method for the comparison with \cite{suspicion}.
\begin{table}[h]
    \centering
    \begin{scriptsize}
        \begin{sc}
            \setlength\tabcolsep{3.0pt}
              \renewcommand{\arraystretch}{0.8}
            \begin{tabular}{lccccr}
              \toprule
              Method & G & $\lambda$ & $p_c$ & $p_m$ & $p_z$\\
              \midrule
              Ours $(\delta = 73.43)$ & 150 & 1& 0.85 & 0.2 & 0.35\\
              Ours $(\delta = 146.87)$ & 150 & 1& 0.85 & 0.01 & 0.20\\
              Ours $(\delta = \infty)$ & 150 & 1& 0.85 & 0.001 & 0.25\\
              \bottomrule
            \end{tabular}
        \end{sc}
    \end{scriptsize}
    \begin{scriptsize}
        \begin{sc}
            \setlength\tabcolsep{3.0pt}
              \renewcommand{\arraystretch}{0.8}
            \begin{tabular}{lccccr}
              \toprule
              Method & G & $\lambda$ & $p_c$ & $p_m$ & $p_z$\\
              \midrule
              Ours $(\delta = 73.43)$ & 150 & 1& 0.85 & 0.2 & 0.35\\
              Ours $(\delta = 146.87)$ & 150 & 1& 0.85 & 0.01 & 0.20\\
              Ours $(\delta = \infty)$ & 150 & 1& 0.85 & 0.001 & 0.25\\
             \bottomrule
            \end{tabular}
      \end{sc}
  \caption{Comparison parameters for Digits dataset with digits 8\&9 (left) and 4\&1 (right).}
  \label{tab:digits_comp_par}
\end{scriptsize}
\end{table}

\begin{table}[h]
\centering
    \begin{scriptsize}
        \begin{sc}
            \setlength\tabcolsep{3.0pt}
          \renewcommand{\arraystretch}{0.8}
            \begin{tabular}{lccccr}
              \toprule
              Method & G & $\lambda$ & $p_c$ & $p_m$ & $p_z$\\
              \midrule
              Ours $(\delta = 73.43)$ & 150 & 1& 0.85 & 0.015 & 0.10\\
              Ours $(\delta = 146.87)$ & 150 & 1& 0.85 & 0.015 & 0.25\\
              Ours $(\delta = \infty)$ & 150 & 1& 0.85 & 0.005 & 0.25\\
              \bottomrule
            \end{tabular}
        \end{sc}
    \end{scriptsize}
    \begin{scriptsize}
        \begin{sc}
            \setlength\tabcolsep{3.0pt}                  \renewcommand{\arraystretch}{0.8}
            \begin{tabular}{lccccr}
              \toprule
              Method & G & $\lambda$ & $p_c$ & $p_m$ & $p_z$\\
              \midrule
              Ours $(\delta = 73.43)$ & 150 & 1& 0.85 & 0.02 & 0.05\\
              Ours $(\delta = 146.87)$ & 150 & 1& 0.85 & 0.01 & 0.10\\
              Ours $(\delta = \infty)$ & 150 & 1& 0.85 & 0.001 & 0.15\\
             \bottomrule
            \end{tabular}
      \end{sc}
      \caption{Comparison parameters for MNIST dataset with digits 3\&2 (left) and 1\&4 (right).}
      \label{tab:mnist_comp_par}
    \end{scriptsize}
\end{table}
Table \ref{tab:comp_par} contains the hyperparameters used by our algorithm during the comparison.
\begin{table}[h]
    \centering
    \begin{scriptsize}
        \begin{sc}
        \setlength\tabcolsep{3.0pt}
          \renewcommand{\arraystretch}{0.8}
        \begin{tabular}{lccccr}
          \toprule
          Method & G & $\lambda$ & $p_c$ & $p_m$ & $p_z$\\
          \midrule
          Ours $(\delta = 0.15)$ & 20 & 1& 0.85 & 0.01 & 0.10\\
          Ours $(\delta = 0.30)$ & 20 & 1& 0.85 & 0.01 & 0.10\\
          \bottomrule
        \end{tabular}
      \end{sc}
    \end{scriptsize}
    \begin{scriptsize}
        \begin{sc}
        \setlength\tabcolsep{3.0pt}
          \renewcommand{\arraystretch}{0.8}
        \begin{tabular}{lccccr}
          \toprule
          Method & G & $\lambda$ & $p_c$ & $p_m$ & $p_z$\\
          \midrule
          Ours $(\delta = 10)$ & 50 & 1& 0.85 & 0.15 & 0.20\\
          Ours $(\delta = 20)$ & 50 & 1& 0.85 & 0.15 & 0.20\\
         \bottomrule
        \end{tabular}
      \end{sc}
    \end{scriptsize}
    \caption{Comparison parameters for Seeds (left) and MoCap Hand Postures (right).}
    \label{tab:comp_par}
\end{table}

\end{document}